\pgfplotsset{width=7cm,compat=newest}
\newtheorem{proto-theorem}{Proto-theorem}
\newtheorem{lemma}{Lemma}
\newtheorem{defn}{Definition}
\newcommand{\mc}[1]{\mathcal{#1}}
\DeclarePairedDelimiterX{\inp}[2]{\langle}{\rangle}{#1, #2}
\begin{document}

\title{Improving Supervised Phase Identification Through the Theory of Information Losses}

\author{Brandon~Foggo,~\IEEEmembership{Student Member,~IEEE} and~Nanpeng~Yu,~\IEEEmembership{Senior Member,~IEEE}
}
    
\tikzset{
		block/.style = {draw, fill=white, rectangle, minimum height=1em, minimum width=3em},
		tmp/.style  = {coordinate}, 
		sum/.style= {draw, fill=white, circle, node distance=1cm},
		input/.style = {coordinate},
		output/.style= {coordinate},
		pinstyle/.style = {pin edge={to-,thin,black}
		}
	}
     
\RestyleAlgo{boxruled}

\maketitle
    
\begin{abstract}
\footnote{© 20XX IEEE.  Personal use of this material is permitted.  Permission from IEEE must be obtained for all other uses, in any current or future media, including reprinting/republishing this material for advertising or promotional purposes, creating new collective works, for resale or redistribution to servers or lists, or reuse of any copyrighted component of this work in other works.} This paper considers the problem of Phase Identification in power distribution systems. In particular, it focuses on improving supervised learning accuracies by focusing on exploiting some of the problem's information theoretic properties. This focus, along with recent advances in Information Theoretic Machine Learning (ITML), helps us to create two new techniques. The first transforms a bound on information losses into a data selection technique. This is important because phase identification data labels are difficult to obtain in practice. The second interprets the properties of distribution systems in the terms of ITML. This allows us to obtain an improvement in the representation learned by any classifier applied to the problem. We tested these two techniques experimentally on real datasets and have found that they yield phenomenal performance in every case. In the most extreme case, they improve phase identification accuracy from $51.7\%$ to $97.3\%$.  Furthermore, since many problems share the physical properties of phase identification exploited in this paper, the techniques can be applied to a wide range of similar problems. 
\end{abstract}

\begin{IEEEkeywords}
Distribution network, information theoretic machine learning, phase identification, supervised learning.
\end{IEEEkeywords}

\section{Introduction}
A power distribution circuit encompasses several components. It contains busses, powerlines, substations, regulators, transformers, and more. The physical arrangement of these components constitute the circuit's topology, which dictates much of the system's operation and planning. Over time, a circuit's topology will change. For example, a power outage may initiate a structural change such that the number of the effected customers is minimized. But the topology documentation will often not follow this change - they are only typically updated after major distribution expansion projects. As a result, there are long periods of time in which a circuit's topology is wrong, and this poses a serious problem. Power flow analysis, state estimation, and Volt-VAR control all depend on accurate topological information. When this is not available, the usefulness of those methods diminishes, and the system runs less effectively as a whole. Methods for faster documentation updates are necessary. 

Critical to the application of topology identification is the subproblem of phase identification. This subproblem describes the composition of each powerline in the network. Typically, a primary distribution powerline is made up of the four fundamental lines $A,B,C$ and $n$. The primary feeder, fed directly from the power substation, often consists of all four. However, at some point, a subset of these lines will be branched from the feeder. This change usually happens once along the path from the substation to any customer. As such, we define a customer's `phase type' as the lines branched along that path. This is illustrated in Figure \ref{fig:distr}. Knowledge of these phase types is necessary for estimation of electrical distances, and for topology reconstruction in general. In fact, since topology estimation is often taken as a precursor to state and parameter estimation in distribution networks, we can think of phase identification as the first step in a pipeline of distribution system modelling techniques.
\begin{figure}[t!]
    \centering
    \includegraphics[width=0.45\textwidth]{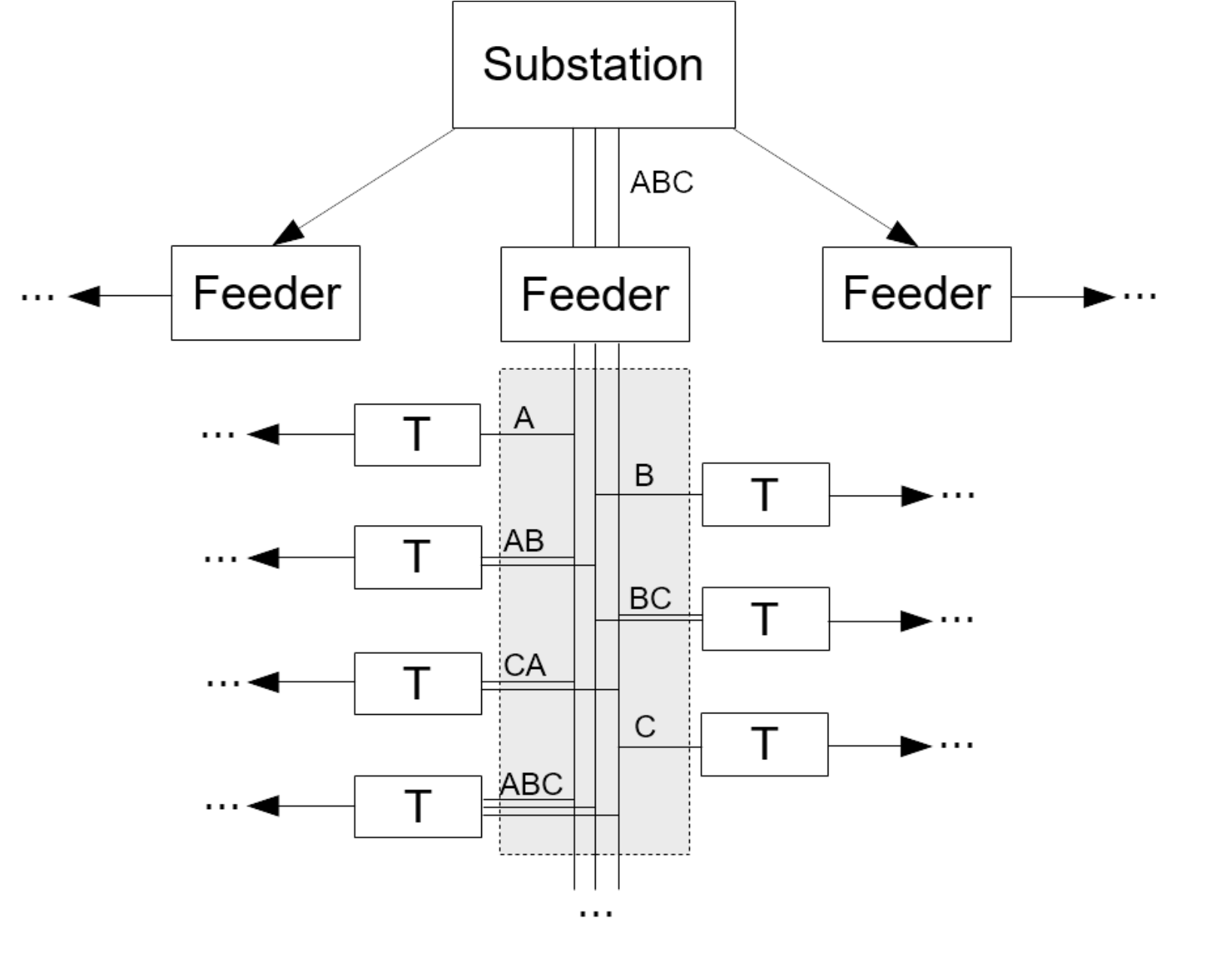}
    \caption{Illustration of the Phase Identification Problem. Here, each $T$ represents a transformer, and each combination of the letters $A$, $B$, and $C$ refers to a phase connection type.}
    \label{fig:distr}
\end{figure}

Literature on phase identification exists, but is limited. Like most power system applications, there are two broad classes of phase identification approaches - one model based, and the other data-driven. Like those other applications, model based methods have high interpretability but low accuracy, and data-driven methods have high accuracy but low interpretability. Our research poses itself as an intermediate between these extremes. This is done through Information Theoretic Machine Learning (ITML). This branch of machine learning connects traditional learning theory to the field of Information Theory. As such, interpretable notions such as entropy and mutual information are studied. These interpretable measures can be connected to the physics of the problems that we wish to solve.

We will focus on a particular regime of ITML which needs further development - the small data regime. Phase identification methods, particularly supervised methods, require a lot of training data to achieve high accuracies. But obtaining this training data requires portions of the lengthy field tests which we were trying to avoid in the first place. Thus we have a trade-off between phase identification accuracy and the amount of time that a distribution circuit's topology documentation is incorrect. In this paper, we will develop methods to obtain higher phase identification accuracies even when small training datasets are used. 

In particular, we will develop two techniques: Inverse Schur Training Data Selection, and Information Loading. The first corresponds to selecting the most informative data-points prior to field testing. The second consists of a modification to the objective function of a standard learning algorithm, with some modifications to the training phase in order to support it. We will heavily emphasize the theoretical reasoning behind these techniques, particularly in why they are applicable to the phase identification problem. Our proposed techniques have the following benefits: 
\begin{itemize}
	\item They require little infrastructure or physical labor. 
	\item They do not require any modeling of the network.
	\item They are robust with respect to missing data.
	\item They are easy to implement and tune.
	\item They can handle any variety of phase connections. 
	\item They provide the best supervised accuracies to date. 
	\item The representations learned are highly meaningful.
	\item They are generalizable to other networked systems.
\end{itemize}

\section{Related Work}
Much phase identification work is based on physical approaches \cite{chen2012, US8825416, wen2015, caird2012 }. References \cite{chen2012} and \cite{US8825416} develop phase identification systems based on high resolution timing measurements communicated between the base station and the feeder transformer secondaries and/or individual electricity consumers. This system is highly accurate and even yields the voltage phasors of the secondaries themselves instead of just the phase names of the wires connected to them. However, the system is quite sophisticated. Deploying such a system for each feeder across the many distribution circuits overseen by a utility is very expensive. 

Reference \cite{wen2015} describes a phase identification technique using micro-synchrophasors. The overhead cost of this method is less than the that of reference \cite{chen2012} since micro-synchrophasors are mobile - hence only a few devices are required. However, this reduced overhead cost results in increased labor and time costs, as the micro-synchrophasors must be reinstalled several times throughout the distribution network. Reference \cite{caird2012} patents a method for phase identification through signal injection. A signal generator is placed at the base substation and a unique signal is created for each phase. These signals are detected by a signal discriminator at each customer location. By matching the signals, the phase connectivities can be accurately reproduced. Like reference \cite{chen2012}, this method is very intensive in labor and time, but relatively cheap in overhead. These physical methods are the approaches that are typically taken during the field testing projects employed in practice. Furthermore, they act as the base upon which training labels should be acquired in application of this work. 	

The amount of literature related to solving of phase identification problem with data-driven methods is more limited. Of what does exist, most is unsupervised. This unsupervised branch can be split further into `model-agnostic' and `model-based' sub-branches. 

In the model-based sub-branch, reference \cite{985153} compares simulated power-flow solutions for a given phase configurations to real data. This method is accurate, but requires a correct system model including everything except phase connectivity's - e.g. line parameters, network topology. References \cite{7526150} and \cite{6102329} group customers by phase such that the total sum of power injections on each phase matches that of the substation or distribution transformers up to some error. These methods are somewhat non-robust to missing or erroneous data since these will lead to power mismatches between the measured load and the measured supply. Furthermore, phase configuration is often assumed to already exist in methods of network topology estimation and parameter estimation, so we typically like to think of phase identification as the first step in a long sequence of modelling techniques. Assuming realistic models as an input to the phase identification problem is somewhat unrealistic at this time.

In the model-agnostic sub-branch, one of the most popular techniques is to correlate voltage time series data at a household to the voltage time series data captured with other households \cite{liao2018unbalanced, 6465632, 6039623, 7604110, 7875102}. These works showcase the power of using voltage data as a primary predictor of phase type - an idea which we will study in depth in this paper. However, while the statement `customer A and customer B are on the same phase implies customer A and customer B have correlated voltages' is mostly accurate, these methods struggle in two ways. First, the converse of this statement is not always accurate. That is, customers can be on different phases and still have correlated voltages. Second, knowing that two customers are on the same phase doesn't actually tell us what that phase \textit{is}. This second issue can be resolved by either physical inspection, which are time and labor intensive, or by correlating the relevant voltages to those of the substation \cite{6365289}. But substation voltages are usually only measured either line to line or line to neutral - not both simultaneously, and so this limits the scope of circuits that we are capable of classifying.  

Finally, some unsupervised clustering techniques \cite{7838154, 8245748} and frequency domain feature extraction techniques \cite{7604110} have been tested on this problem. Supervised methods are limited to off-the-shelf algorithms that don't take any consideration into the properties of power distribution networks. To the author's knowledge, the method presented in this paper is the first of its type. 

\color{black}

\section{Background - Information Losses}
\subsection{Information Losses} \label{losses}
ITML studies the following loss term:
\begin{equation}\label{eqn:Iloss1}
    I_{Loss} = \left|I(Z,Y) - \mathbb{E}\left[log_2 \frac{\hat{p}(z,y)}{\hat{p}(z) \hat{p}(y)} \right]\right|
\end{equation}
This equation defines some notation which we will need to go over. First, $Y \in \mc{Y}$ is a random variable denoting the class variable in the machine learning problem. Here, $Y$ refers to the phase type of each customer. Next, we define a random variable $X\in \mc{X}$ (not appearing in (\ref{eqn:Iloss1})) used to represent the features of the machine learning problem. Here, $X$ will refer to a voltage time-series. Finally, $Z$ refers to some random variable generated as a stochastic function of $X$. This can technically refer to \textit{any} such random variable. However, we think of it as consisting of the internal state of a machine learning algorithm we wish to use.  For example, $Z$ may refer to the value of a hidden layer in a neural network. 

The term $I(Y;Z)$ refers to the mutual information between $Y$ and $Z$ \cite{cover2012elements}. 
\begin{equation}
	I(Y;Z) = \int log_2 \frac{dP_{YZ}}{d(P_Y \otimes P_Z)} dP_{YZ}
\end{equation}
The $\hat{p}$ notation refers to an estimated probability distribution - for example, a histogram in the discrete case, or a parametric model with a slightly incorrect parameter realization. We will further denote the rightmost term in equation (\ref{eqn:Iloss1}) as $\hat{I}(Y;Z)$. In general, all information theoretic terms pertaining to the estimated model will be denoted with a hat. 

We are interested, in particular, in these information quantities for two specific types of random variables:
\begin{defn}
Let $\epsilon>0$. Let $Z_{\epsilon}^*$ and $\hat{Z}_{\epsilon}$ be random variables that are at most ${\epsilon \text{-suboptimal}}$ for the following information optimization problems: 
\begin{align*}
\underset{p(z|x)}{sup}&{I(Y;Z)}  \\
\text{subject to }& I(X;Z) = I
\end{align*}
\begin{align*}
\underset{p(z|x)}{sup}&{\hat{I}(Y;Z)}  \\
\text{subject to }& I(X;Z) = I
\end{align*}
\end{defn}
We think of these representations as follows: $Z^*$ is the optimal estimator when we have perfect knowledge of our dataset, and $\hat{Z}$ is the optimal estimator when using a finite training dataset to estimate it. Both optimizations are subject to a constraint on the complexity of the estimator which comes from the Information Bottleneck method \cite{tishby2000information, hsu2006video, slonim2000agglomerative}.  
Given these definitions, we further define the information losses associated with the sample $S$ by:

\begin{equation}
I_{loss}(S) = |I(Y;Z^*) - I(Y; \hat{Z})|
\end{equation}
This quantity is strongly related to `minimal classification error'. There exist several inequalities indicating that classification error cannot be lower than a monotonic function of $I_{Loss}(S)$ \cite{shamir2008learning}. The most well known inequality of this type is due to Fano \cite{cover2012elements}, which state that, if $I(Y;Z^*) = H(Y)$ (i.e. the best \textit{possible} representation $Z^*$ is actually perfect), then:
\begin{equation}
	h_2(P_e) + P_elog_2\left(|\mathcal{Y}| - 1 \right) \geq I_{loss}(S) 
\end{equation}
where $P_e$ is the minimum probability of classification error for a classifier using representation $\hat{Z}$. 

It has been shown \cite{2019arXiv190205991F} that there is a relationship between $I_{loss}(S)$ and a measure known as the \textit{conditional total variation} which is defined bellow:
\begin{defn}
Given a conditional probability distribution $p(y|x)$ and an estimated distribution $\hat{p}(y|x)$, the conditional total variation of $\hat{p}(y|x)$ from $p(y|x)$ is given by 
\begin{equation}
\delta_{\hat{p}} = \mathbb{E}_{\mathbb{P}_X}\left[ \frac{1}{2}\sum_y \left|p(y|x) - \hat{p}(y|x)\right| \right]
\end{equation}
\end{defn}
This measure converges under standard machine learning regimes at least as fast as cross-entropy errors. The exact relationship between the term and information losses is given as follows:
\begin{lemma} \label{info_bound}
Let $\hat{p}(y|x)$ be a probability distribution estimate established via the the estimator $\hat{Z}_{\epsilon}(x)$. Let $h_2$ denote the binary entropy function ${h_2(t) = -tlog_2(t) - (1-t)log_2(1-t)}$. Then 
\begin{equation}\label{eqn:info_bound}
I_{loss}(S) \leq 2 \left( \vphantom{\sum} \delta_{\hat{p}}I(X;Z) + h_2(\delta_{\hat{p}}) \right) + \epsilon
\end{equation}
\end{lemma}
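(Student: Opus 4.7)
The plan is to split the argument into two independent parts: a triangle-inequality manipulation that consumes the $\epsilon$-suboptimality of $Z^*_\epsilon$ and $\hat{Z}_\epsilon$, followed by a per-representation continuity bound relating $|I(Y;Z) - \hat{I}(Y;Z)|$ to $\delta_{\hat{p}}$ and the complexity $I(X;Z)$.

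For the first part, I would insert $\hat{I}(Y;Z^*)$ and $\hat{I}(Y;\hat{Z})$ into $I(Y;Z^*) - I(Y;\hat{Z})$ and write
\begin{equation*}
I(Y;Z^*) - I(Y;\hat{Z}) = \Delta(Z^*) + \bigl[\hat{I}(Y;Z^*) - \hat{I}(Y;\hat{Z})\bigr] - \Delta(\hat{Z}),
\end{equation*}
where $\Delta(Z) := I(Y;Z) - \hat{I}(Y;Z)$. Because $\hat{Z}$ is $\epsilon$-optimal for the estimated problem and $Z^*$ is feasible there (both satisfy $I(X;Z)=I$), the middle bracket is at most $\epsilon$; in the reverse direction, $\epsilon$-optimality of $Z^*$ for the true problem gives $I(Y;\hat{Z}) - I(Y;Z^*) \leq \epsilon$ directly. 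Hence $I_{loss}(S) \leq |\Delta(Z^*)| + |\Delta(\hat{Z})| + \epsilon$, and the task reduces to showing $|\Delta(Z)| \leq \delta_{\hat{p}}\,I(X;Z) + h_2(\delta_{\hat{p}})$ for any $Z$ with $I(X;Z)=I$.

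For the per-representation bound, I would exploit the Markov chain $Y - X - Z$, which holds under both $p$ and $\hat{p}$ since only $p(y|x)$ is being estimated while the representation channel $p(z|x)$ is treated as fixed. The tower identities $p(y|z) = \mathbb{E}_{p(x|z)}[p(y|x)]$ and $\hat{p}(y|z) = \mathbb{E}_{p(x|z)}[\hat{p}(y|x)]$ together with convexity of the total variation in its first argument yield
\begin{equation*}
\mathbb{E}_{p(z)}\!\Bigl[\tfrac{1}{2}\textstyle\sum_{y} |p(y|z) - \hat{p}(y|z)|\Bigr] \leq \delta_{\hat{p}},
\end{equation*}
and similarly for the marginals over $y$. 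A Csisz\'ar / Fano-type continuity inequality then converts these total-variation bounds into bounds on the entropy differences $|H(Y|Z) - H_{\hat{p}}(Y|Z)|$ and $|H(Y) - H_{\hat{p}}(Y)|$, whose combination controls $|\Delta(Z)|$.

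The main obstacle will be obtaining the factor $I(X;Z)$ in the continuity bound rather than the $\log|\mc{Y}|$ that the textbook Csisz\'ar inequality would produce. I would address this via a data-processing refinement of Fano: since $I(Y;Z) \leq I(X;Z) = I$, the effective log-cardinality of the $Y$-partition that the bottleneck $Z$ can resolve is $I$ rather than $\log|\mc{Y}|$, and the Fano-type bound should be instantiated at this smaller effective alphabet. Granting this sharpening, the per-representation estimate $|\Delta(Z)| \leq \delta_{\hat{p}}\,I(X;Z) + h_2(\delta_{\hat{p}})$ follows, and substitution into the decomposition from the first part collects to the claimed $I_{loss}(S) \leq 2\bigl(\delta_{\hat{p}}\,I(X;Z) + h_2(\delta_{\hat{p}})\bigr) + \epsilon$.
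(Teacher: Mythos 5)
The paper does not actually prove this lemma; it is imported verbatim from the cited reference \cite{2019arXiv190205991F}, so there is no in-paper argument to compare against. Judged on its own terms, your skeleton is sound and almost certainly mirrors the intended derivation: the insertion of $\hat{I}(Y;Z^*)$ and $\hat{I}(Y;\hat{Z})$, the one-sided use of $\epsilon$-suboptimality in each direction (noting that $Z^*$ and $\hat{Z}$ are both feasible for both optimizations since each satisfies $I(X;Z)=I$), and the resulting reduction to a per-representation bound $|\Delta(Z)| \leq \delta_{\hat{p}} I(X;Z) + h_2(\delta_{\hat{p}})$ correctly account for both the factor of $2$ and the single additive $\epsilon$ in the statement. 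The Jensen/tower step showing that the expected total variation at the level of $p(y|z)$ versus $\hat{p}(y|z)$ is dominated by $\delta_{\hat{p}}$ is also correct, granted the (reasonable, but worth stating) assumption that only $p(y|x)$ is estimated while $p(x)$ and the channel $p(z|x)$ are shared between the true and estimated joints.

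The genuine gap is the step you yourself flag and then wave through with ``granting this sharpening.'' The appearance of $I(X;Z)$ in place of $\log|\mc{Y}|$ is the entire content of the lemma --- it is what makes the bound interact with the information-bottleneck constraint and what motivates both techniques in the paper --- and your justification for it is not an argument. The Csisz\'ar-type continuity bound for $|H(Y|Z)-H_{\hat{p}}(Y|Z)|$ is an inequality about distributions on $\mc{Y}$; the quantity $I(X;Z)$ is a property of the $X$--$Z$ channel and does not define any ``effective alphabet'' of $Y$ on which Fano could be instantiated, nor does $I(Y;Z)\leq I(X;Z)$ license substituting $I(X;Z)$ for a log-cardinality inside such a bound (in general $I(X;Z)$ can be far smaller than $\log(|\mc{Y}|-1)$, so this is a strict strengthening that must be earned). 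A workable route is to avoid the $Y$-alphabet entirely: write the discrepancy in terms of the $Z$-side log-likelihood ratios, e.g.\ $I(Y;Z)-\hat{I}(Y;Z)$ as an expectation of $\log\frac{p(z|y)}{p(z)}$-type terms against the signed measure of total mass $2\delta_{\hat{p}}$, and bound the resulting integrand by a quantity whose expectation is $I(X;Z)$ (plus the $h_2(\delta_{\hat{p}})$ correction from renormalizing the signed measure). Until that change-of-measure step is carried out explicitly, the proof establishes only the weaker bound with $\log(|\mc{Y}|-1)$ in place of $I(X;Z)$, which is not the stated lemma.
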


In this paper, we will consider techniques of mitigating information losses for the phase identification problem. This will be primarily through careful considerations of (\ref{eqn:info_bound}).

\section{Technique - Inverse Schur Training Data Selection}
\subsection{Motivation - Field Testing}
The first technique addresses the technical limitations of supervised phase identification. Namely, that obtaining labeled data is time consuming. Obtaining phase labels for a given customer requires on-site measurements with phasor measurement unit or phasing meter. Gathering phase connection information for a large number of customers with these equipment to serve as labels would be prohibitively costly.

It is critical that we get as much as we can out of just a few phase labels. This inspires the use of active learning \cite{settles2012active, tong2001active}. This is a field of machine learning which focuses on the selection of training data points that best represent the whole data set. It can be loosely divided into unsupervised methods and supervised methods - with the latter division dominating the number of available techniques by a large margin. Supervised methods generally update a machine learning model in conjunction with label selection. But this step is lengthy on its own, and can not be performed in parallel with on-location travelling or with device installation. Thus supervised techniques may be deemed too lengthy when data acquisition requires field testing. Thus we will focus on unsupervised techniques. We will build upon some recent theoretical work on the link between information losses and data selection \cite{foggo2019interpreting}.  

\subsection{Training Data Selection via Information Losses}
Recent theoretical work has found a bound on the term $\delta_{\hat{p}}$ in equation (\ref{eqn:info_bound}). It requires some notational setup to describe. 

We assume that we are given an unlabelled dataset $\mc{D}$ of size $N$. We wish to select a subset $S \subset \mc{D}$ of cardinality $M < N$. Let $k(\cdot, \cdot)$ be a continuous, symmetric, positive definite kernel function. Let $K$ denote the matrix

\begin{equation}
    \begin{bmatrix}
    k(x_1, x_1) & k(x_1, x_2) & \cdots k(x_1, x_N) \\
    k(x_2, x_1) & k(x_2, x_2) & \cdots k(x_2, x_N) \\
    \cdots & \cdots & \cdots \\
    k(x_N, x_1) & k(x_N, x_2) & \cdots k(x_N, x_N)
    \end{bmatrix}
\end{equation}
Let $S$ denote the selected training dataset, ${S \subseteq \mc{D}}$, and let ${SS =\{i: x_i \in S \}}$. Finally, let $K/K_{SS}$ denote the schur complement of $K$ with respect to $K_{SS}$. That is, if $\tilde{K}$ is the permuted form of $K$ such that the training indices correspond to the upper left block
\begin{equation}
    \tilde{K} = \begin{bmatrix}\tilde{K}_{SS} & \tilde{K}_{SU} \\ \tilde{K}_{US} & \tilde{K}_{UU} \end{bmatrix} 
\end{equation}
then
\begin{equation}
    K/K_{SS} = \tilde{K}_{UU} - \tilde{K}_{US}\tilde{K}_{SS}^{-1}\tilde{K}_{SU}
\end{equation}
Now let $T$ denote the following operator on $L_{\mathcal{P}_X}^2$:
\begin{equation}
    T[f](y) = \int k(y,x)f(x)d\mathbb{P}_X(x)
\end{equation}
The bound on $\delta_{\hat{p}}$ is expressed in the following lemma
\begin{lemma}
Assuming that our learner performs better than kernel optimization of the expected total variation, we have:
    \begin{align}
        \delta_{\hat{p}} \leq &\frac{p(Y=1)}{N}Trace(\sqrt{K/K_{SS}}) +\epsilon_{\mc{H}}
    \end{align}
    where $\epsilon_\mc{H}$ is given by
    \begin{align}
        \epsilon_\mc{H} = \frac{1}{2}\sum_{y}\|(I-T)p(Y=y|x)\|_{L_{\mathcal{P}_X}^1}
    \end{align}
$\sqrt{A}$ refers to the element-wise square root of the matrix $A$.
\end{lemma}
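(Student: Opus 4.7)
The plan is to reduce the bound on $\delta_{\hat{p}}$ to the error analysis of a kernel-regression estimator of the class-posterior probabilities, and then split that error into an RKHS-approximation piece (which yields $\epsilon_\mc{H}$) and a finite-sample piece (which yields the Schur-complement term). The key identity driving the second piece is that the Schur complement $K/K_{SS} = \tilde{K}_{UU} - \tilde{K}_{US}\tilde{K}_{SS}^{-1}\tilde{K}_{SU}$ coincides with the Nystr\"om residual, and also with the posterior covariance of a Gaussian-process regression trained on $S$. So I would expect the proof to proceed in three stages: (i) reduce to a kernel learner, (ii) split by triangle inequality, (iii) identify each piece with a recognizable quantity.

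First I would invoke the standing hypothesis that our learner outperforms the kernel-optimization baseline; this lets us replace $\hat{p}$ on the right-hand side of the definition of $\delta_{\hat{p}}$ by the kernel-regression estimator $\hat{f}(x) = k(x,S) K_{SS}^{-1} y_S$. For each class $y\in\mc{Y}$ I would insert the intermediate object $T[p(y|\cdot)](x)$ and use the triangle inequality
\begin{equation*}
|p(y|x) - \hat{p}(y|x)| \leq |p(y|x) - T[p(y|\cdot)](x)| + |T[p(y|\cdot)](x) - \hat{p}(y|x)|.
\end{equation*}
Taking expectation under $\mathbb{P}_X$, multiplying by $1/2$, and summing over $y$, the first term collapses to exactly $\epsilon_\mc{H}$, by definition of the operator $T$ and of the $L^1$ norm appearing in that quantity.

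Second, I would bound the remaining estimation error by exploiting the identification of $(K/K_{SS})_{ii}$ as the GP posterior variance of $f(x_i)-\hat{f}(x_i)$ at each unlabeled point $x_i\in U$. The Jensen / $L^1$-vs-$L^2$ inequality $\mathbb{E}|U|\leq \sqrt{\mathbb{E}[U^2]}$ then converts this variance into a pointwise $L^1$ bound $\sqrt{(K/K_{SS})_{ii}}$, with training points contributing zero. Approximating $\mathbb{E}_{\mathbb{P}_X}$ by the empirical average over the $N$ samples yields the $\frac{1}{N}\mathrm{Trace}(\sqrt{K/K_{SS}})$ factor, with the square root entrywise.

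The main obstacle is tracking the $p(Y=1)$ prefactor correctly. This factor should arise from bounding the effective label-noise variance: when the kernel regression is fit to binary-encoded labels, the pointwise label variance is $p(Y=1|x)(1-p(Y=1|x))$, whose marginal expectation is at most $p(Y=1)$, and this scales the GP posterior-variance formula. Matching this carefully so that the prefactor comes out as $p(Y=1)$ rather than $\sqrt{p(Y=1)}$ or $p(Y=1)(1-p(Y=1))$ is the delicate step, and likely requires pulling the variance scaling inside the square root before taking the trace. The rest of the argument I expect to be routine linear algebra combined with standard Jensen-type inequalities.
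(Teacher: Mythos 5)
The paper does not actually prove this lemma: it is imported verbatim from the cited prior work on information losses and data selection (the reference on interpreting information losses), so there is no in-paper argument to compare yours against. Judged on its own merits, your architecture is the natural one and almost certainly the intended one: (i) the standing hypothesis licenses replacing the learner by the kernel-regression baseline, (ii) the triangle inequality through $T[p(y|\cdot)]$ isolates $\epsilon_\mc{H}$ exactly as defined, and (iii) the identification of $K/K_{SS}$ with the Nystr\"om residual / Gaussian-process posterior covariance, followed by $\mathbb{E}|U|\leq\sqrt{\mathbb{E}[U^2]}$ entrywise, is precisely what produces a trace of an \emph{element-wise} square root rather than a matrix square root. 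One point you should make explicit rather than gloss as ``approximating'': the expectation in $\delta_{\hat{p}}$ must be taken with respect to the empirical measure on $\mc{D}$ (the setting here is transductive --- $\mc{D}$ is the full customer population), otherwise the passage from $\mathbb{E}_{\mathbb{P}_X}$ to $\frac{1}{N}\sum_i$ is an unquantified extra error term, not a step in the proof.

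The genuine gap is the one you flag yourself: the $p(Y=1)$ prefactor is not derived, only conjectured. Your candidate mechanism (label-noise variance $p(Y=1|x)(1-p(Y=1|x))$ scaling the posterior variance) would naturally surface \emph{inside} the square root and hence contribute $\sqrt{p(Y=1)}$, not $p(Y=1)$; to get the stated linear factor the scaling has to enter as a prior amplitude on the regression target $p(Y=1|x)$ itself (whose $L^\infty$ or prior second moment is controlled by the class marginal), multiplying the posterior standard deviation rather than the posterior variance. As written, your step (iii) proves a bound of the form $\frac{c}{N}\mathrm{Trace}(\sqrt{K/K_{SS}})+\epsilon_\mc{H}$ for some constant $c$ you have not pinned down, and the summation over $y$ in $\epsilon_\mc{H}$ versus the single-class factor $p(Y=1)$ in the leading term suggests the original derivation is carried out per class in a binary (one-vs-rest) encoding and then assembled --- a bookkeeping step your sketch omits entirely. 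Until that constant is nailed down the proof is a correct skeleton with an unresolved quantitative core.
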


Thus to reduce information losses, we desire to minimize the term $Trace(\sqrt{K/K_{SS}})$ under a well fitting kernel (i.e. such that the second term is small). For the phase identification problem, and for problems whose datasets are generated by models with well-fitting linearizations, the cosine kernel works well. Thus we wish to solve
\begin{align}\label{prob:data_opt}
    \underset{S}{\text{min }} &Trace\sqrt{K/K_{SS}} \\
    s.t.~ &k(x,y) = \frac{x^Ty}{\|x\|\|y\|}
\end{align}
However, strict optimization of (\ref{prob:data_opt}) will take exponential time. Furthermore, since each evaluation of $K/K_{SS}$ for a given $S$ will require a matrix inverse calculation, greedy optimization performs poorly as well. Indeed, the computational complexity grows with $O(NM^4)$. If we choose $M$ such that $M=\rho N$ for some proportion $\rho$, then this scales with $O(N^5)$, which is quite poor. 

However, we can heuristically optimize (\ref{prob:data_opt}) in $O(N^3)$ time as follows:
First, we note that 
\begin{equation}
    K^{-1} = \begin{bmatrix} 
    \cdot & \cdot \\ \cdot &(K/K_{SS})^{-1}
        \end{bmatrix}
\end{equation}
Thus we can control the value of $Trace((K/K_{SS})^{-1})$ by data point selection in a predictable way. That is, the trace of $(K/K_{SS})^{-1}$ will be large if we pick points corresponding to small diagonal elements in $K^{-1}$.

We then note a few correspondences between the trace of a matrix and the trace of its inverse. First, by the Cauchy-Schwartz inequality, we have:
\begin{equation}
    len(A) \leq Tr(A)Tr(A^{-1})
\end{equation}
where $len(A)$ is the size of either axis of $A$. 

Furthermore, we have the following lemma:
\begin{lemma}
Consider the partially ordered  set of matrices $M = \{A: A=K/\{\alpha\} \text{ for some index $\alpha$ of size $M$}\}$ in the Loewner order. That is, $A \leq B$ iff. $A-B$ is positive semi-definite. Let $A^*$ be the matrix in this poset corresponding to the index $\alpha^*$ which maximizes $(K/\{\alpha^*\})^{-1}$. Then there exists no matrix $B \in M$ such that $A^* > B$. 
\end{lemma}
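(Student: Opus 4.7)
The plan is to prove the lemma by contradiction, leveraging two elementary but essential facts about the Loewner order on the positive-definite cone: matrix inversion reverses the order strictly, and the trace functional preserves it strictly. In context (the preceding paragraph uses $\mathrm{Tr}((K/K_{SS})^{-1})$ as the scalar proxy to be driven up), I would interpret the phrase ``maximizes $(K/\{\alpha^*\})^{-1}$'' as maximizing the trace of that inverse. Since $k$ is a positive definite kernel, $K$ is positive definite; every Schur complement $K/\{\alpha\}$ is therefore positive definite as well, so every element of $M$ lies in the interior of the PSD cone and all inverses exist.

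With the setup in place, I would recall the two monotonicity facts. If $0 < B < A$ in the Loewner order, meaning $A - B$ is positive semidefinite and nonzero, then $B^{-1} - A^{-1}$ is positive semidefinite and nonzero; this is the strict order-reversing property of inversion on the PSD cone, which follows from the spectral theorem applied to $A^{-1/2} B A^{-1/2}$. In turn, if $C$ is positive semidefinite and nonzero, then $\mathrm{Tr}(C) > 0$, because $\mathrm{Tr}(C)$ is the sum of the nonnegative eigenvalues of $C$ and at least one of them is strictly positive.

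Now suppose for contradiction that there exists $B \in M$ with $A^* > B$ in the Loewner order. By the order-reversing property of inversion, $B^{-1} > (A^*)^{-1}$ strictly, and strict monotonicity of the trace then yields $\mathrm{Tr}(B^{-1}) > \mathrm{Tr}((A^*)^{-1})$. This contradicts the defining property of $\alpha^*$ as an index that maximizes $\mathrm{Tr}((K/\{\alpha\})^{-1})$ over $\alpha$, completing the proof. The main obstacle is interpretive rather than technical: once a scalar functional on $M$ that is strictly Loewner-monotone is fixed (the trace being the natural choice here, and indeed the quantity used in the preceding heuristic), the remainder is a one-line application of order-monotonicity and requires no calculation.
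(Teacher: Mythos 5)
Your proof is correct and follows essentially the same route as the paper's: interpret ``maximizes $(K/\{\alpha^*\})^{-1}$'' as maximizing the trace of the inverse, assume a strictly smaller $B$ exists, apply the order-reversing property of inversion and the monotonicity of the trace, and derive a contradiction with maximality. The only difference is that you supply the supporting details (positive definiteness of the Schur complements, strictness of the two monotonicity facts) that the paper leaves implicit.
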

This lemma, which we will prove shortly, shows that picking data points to maximize $Tr((K/K_{SS})^{-1})$ will force $Tr(K/K_{SS})$ to be smaller than that of a large range of other datasets. Specifically, if $B$ is comparable to $A^*$ in the Loewner order (i.e. either $A^*-B$ or $B-A^*$ is positive semidefinite) then $A^*$ must be the smaller of the two and so $Tr(A^*) \leq Tr(B)$ by monotonicity of the trace operator.  We now proceed to this lemma's proof.

\begin{proof}
By the hypothesis of the lemma, $A^*$ has the largest inverse trace of all matrices in $M$. That is,  $Tr(C^{-1}) \leq Tr((A^*)^{-1})$ for all $C \in M$. Now suppose for the sake of contradiction that there exists $B \in M$ such that $A^* > B$. Then $(A^*)^{-1} < B^{-1}$ by monotanacity of the inverse operator. This, in turn, implies that $Trace((A^*)^{-1} ) < Trace(B^{-1})$, contradicting the hypothesis. 
\end{proof}

\section{Technique - Information Loading}
\subsection{Voltage Data and Phase Identification}
The second technique exploits the properties of our feature space. We will first describe what our feature space is and why we use it. We will then move into deriving the exploitable properties of this space - the main assertion being that a standardized voltage dataset has relatively low entropy. We will then experimentally validate this assertion. Finally, we will describe a technique, called information loading, which will exploit this property. 

Our feature space consists of smart-meter voltage magnitude time series data. Voltage data is fairly informative of phase type. Thus it is a good candidate feature space for the phase identification problem. This can be seen through the following example. 

Consider a power injection at bus $k$ whose phase type is $AB$. This induces a current along the lines $A$ and $B$. Thus, a voltage change will occur along those lines throughout the circuit. Any customer also feeding from either of those lines will notice a change. Due to the capacitive and inductive effects of the primary feeder, both lines will also induce a voltage change along the lines $C$ and $n$. However, the off-diagonal elements of the phase impedance and shunt admittance matrices are much smaller than the diagonal ones. Hence, the power injection at bus $k$ will have much less effect on phase $C$ than phase $A$ and $B$. Thus, the customers whose voltage data is most effected by this power injection are those on $AB$. Second to these customers are customers who share either the phase line $A$ or $B$ ($An, Bn, BC, CA$). Finally, customers who do not pull from either $A$ or $B$ ($Cn$) will hardly be effected at all.

Thus, while this current injection somewhat affects all customers, it affects customers of the same phase the most. Hence, voltage data is informative of phase connection type. 

\subsection{Properties of Voltage Data}\label{sub:properties}
We will now provide some rough quantitative descriptions of distribution systems. We consider a simple model in which each `customer' corresponds to a distribution transformer branched from the primary feeder. That is, we are ignoring the properties of the secondary distribution. We assume that bus $0$ represents the distribution substation and that each bus is a fixed electrical distance $\Delta_{E}$ from the previous bus. We will further assume that the impedance along the primary feeder is a constant $z$. We assume without loss of generality that the customers are indexed in order by their electrical distance from the distribution transformer. We let $V_{p}(x)$ denote the voltage on phase $p$ at position $x$. We assume that we are working in a per unit system and that our system is fairly balanced, such that ${V_A(0)\approx 1}$, ${V_B(0)\approx e^{-\frac{2\pi}{3}}}$, ${V_C(0)\approx e^{\frac{2\pi}{3}}}$, ${V_n(0)\approx 0}$. We will assume for simplicity that there are no three phase loads in the circuit. 

Now, let $\mu$ denote the degenerate measure $\sum_{j=1}^N \delta(j\Delta_{E})$ and let $i$ denote a stochastic process over $[0, N\Delta_{E}]$ taking complex values. When $i$ is evaluated at any $j\Delta_{E}, j=1,2,\cdots,N$, this is to be interpreted as the current injections into the system by customer $j$. Finally, we denote as $\vec{\phi}$ the function defining the phase of the current injection $i$. That is, if the current injection at position $x'$ is on phase $An$, then ${\phi(x') = \begin{bmatrix}1 &0 &0 &-1 \end{bmatrix}^T}$ whereas for phase $AB$ we would have ${\phi(x') = \begin{bmatrix}1 &-1 &0 &0 \end{bmatrix}^T}$. 

Now, denoting ${\vec{V}(x) \triangleq \begin{bmatrix}V_A(x) & V_B(x) & V_C(x) & V_n(x) \end{bmatrix}^T}$,  we have:
\begin{align}
	\vec{V}(x) \approx \vec{V}(0) - z\int_0^x x' i(x') \vec{\phi}(x') d\mu(x')
\end{align}     
Now, the voltage measured at $x$ by a meter, $\tilde{V}(x)$, is given by $\braket{\vec{\phi}(x),\vec{V}(x)}$. Thus:
\begin{align}\label{eqn:v_meas}
	& \tilde{V}(x=k\Delta_{E}) \notag \\
	\approx & \braket{\vec{\phi}(x),\vec{V}(0)}  - z\int_0^x  x' i(x')\braket{\vec{\phi}(x), \vec{\phi}(x')}d\mu(x') \\
	= & \braket{\vec{\phi}(x),\vec{V}(0)} \notag - z\Delta_{E}\sum_{j\leq k} j \cdot i(j\Delta_{E})\braket{\vec{\phi}(x), \vec{\phi}(j\Delta_{E})}
\end{align}
In which we already see our intuition pop out: the customers contributing most to the voltage measurement at $x$ are the customers whose $\vec{\phi}$ vectors have the largest inner product with $\vec{\phi}(x)$, i.e., the customers who share phase lines with the customer at $x$.

Now, (\ref{eqn:v_meas}) refers to phasor quantities. However, smart meters typically only return time averaged voltage magnitudes. We can take this into account by modifying each $\vec{\phi}$ vector in correspondence with the assumption that the circuit is fairly balanced. As such, we imagine lumping all of the single phase customers into one large wye-connected load and all of the two phase customers into one large delta-connected load. We consider each customer's current injections as contributing to current injections on these loads. Then the effect of all of the $A$ injections, for example, is to drop the voltage magnitude $V_A$ along the primary feeder, but not effect the neutral line at all. Thus, the $\vec{\phi}$ corresponding to phase A should be modified to ${\tilde{\phi} = \begin{bmatrix}1 & 0 & 0 & 0 \end{bmatrix}}$. Similarly, the $\vec{\phi}$ corresponding to phase AB should be modified to ${\tilde{\phi} = \begin{bmatrix}\frac{1}{\sqrt{3}} & -\frac{1}{\sqrt{3}} & 0 & 0 \end{bmatrix}}$ to account for the transformation from a line-line current magnitude into a line-neutral current magnitude. With all of the above approximations, and denoting as $\bar{V}$ the time-averaged version of $\tilde{V}$ we have
\begin{equation}
	\bar{V}(k\Delta_{E}) \approx g(k\Delta_E) -  |z|\Delta_{E}\sum_{j \leq k} j \cdot |i(j\Delta_{E})|c_{k,j}
\end{equation}
where $g(k\Delta_E)$ is $1$ if customer $k$ is single phase and $\sqrt{3}$ if customer $k$ is two phase. $c_{k,j}$ is an inner product with ${\frac{1}{\sqrt{3}} \leq |c_{k,j}| \leq \frac{2}{\sqrt{3}}}$. We can tighten the lower bound on the diagonal such coefficients as $1 \leq c_{i,i}$.

Collecting all $\bar{V}(k\Delta_{E})$ together into one vector $\bar{\mathbf{V}}$, we have:
\begin{equation}
	\bar{\mathbf{V}} = \mathbf{g} -  |z|\Delta_E\mathbf{C}\mathbf{J}\mathbf{I}
\end{equation}
where $\mathbf{g}$ collects the $g$ functions over the customers, $\mathbf{I}$ collects current magnitudes over each customer, ${\mathbf{J} = diag(1, 2, \cdots, N)}$, and 
\begin{equation}
\mathbf{C} = \begin{bmatrix}
c_{11} & 0 & 0 & \cdots & 0 \\
c_{21} & c_{22} & 0 & \cdots & 0 \\
c_{31} & c_{32} & c_{33} & \cdots & 0 \\
\cdots & \cdots & \cdots & \cdots & \cdots \\
c_{N1} & c_{N2} & c_{N3} & \cdots & c_{NN} \\
\end{bmatrix}
\end{equation}

Thus the statistics of $\bar{\mathbf{V}}$ are inherited linearly from those of $\mathbf{I}$. Assuming that $I$ is an iid Gaussian distribution with covariance $\sigma^2$, this yields a multivariate Gaussian distribution for $\bar{\mathbf{V}}$. The covariance of $\bar{\mathbf{V}}$ is given by ${\Sigma = |z|^2\Delta_{E}^2\sigma^2(\mathbf{CJ})(\mathbf{CJ})^T}$. We note that this iid Gaussian assumption is not prohibitive when our goal is in showing that our entropy is low; iid Gaussian random variables \textit{maximize} entropy at a fixed variance level. Thus, if the iid Gaussian assumption of $\mathbf{I}$ does not hold, then the entropy of the dataset will be even lower than what we predict. 

\subsection{Entropy Analysis}
The above approximate derivation allows us to analyze the entropic properties of smart-meter voltage measurements.

\begin{lemma}
For a distribution system following the model of subsection {\ref{sub:properties}}, we have:
\begin{align}
	ln\left(\frac{e}{N}\right) - \frac{1}{N}ln\left(\frac{\sqrt{\frac{2\pi}{e^2}}}{(N+1)}\right) \leq \frac{H(\bar{\mathbf{V}})}{N}  \notag \\
	\frac{H(\bar{\mathbf{V}})}{N} \leq \frac{1}{2} ln\left(\frac{12e}{N}\right) - \frac{1}{N}ln\left(\frac{\sqrt{\frac{e^2}{4\pi}}}{(N+1)}\right)
\end{align}
\end{lemma}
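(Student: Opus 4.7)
The plan is to leverage the Gaussian structure of $\bar{\mathbf{V}}$ that follows from the iid Gaussian assumption on $\mathbf{I}$. Starting from the standard formula for the differential entropy of a multivariate Gaussian, $H(\bar{\mathbf{V}}) = \frac{N}{2}\ln(2\pi e) + \frac{1}{2}\ln|\Sigma|$, the problem reduces to producing two-sided estimates of $\ln|\Sigma|$; under the per-feature unit-variance standardization that was advertised as the motivating aim of this section, these become estimates of $\ln|\mathbf{R}|$, where $\mathbf{R}$ is the correlation matrix of $\bar{\mathbf{V}}$, and all dependence on the physical constants $|z|, \Delta_E, \sigma$ cancels.

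The key structural observation is that both $\mathbf{C}$ and $\mathbf{J}$ are lower triangular, so $|\mathbf{CJ}| = \prod_k c_{kk}\cdot N!$ and therefore $|\Sigma|$ factors cleanly into a product whose $N$-dependence is concentrated in $(N!)^2$. Along the same lines, each diagonal entry $\Sigma_{kk} = (|z|\Delta_E\sigma)^2 \sum_{j\leq k} j^2 c_{kj}^2$ can be sandwiched, using the structural bounds $c_{kk}^2 \geq 1$ and $c_{kj}^2 \in [\frac{1}{3},\frac{4}{3}]$, between constant multiples of $k(k+1)(2k+1)/6$. Summing logarithms over $k$ converts these individual pieces into expressions in $\ln(N!)$, $\ln((N+1)!)$, and $\ln((2N+1)!)$, via the elementary identities $\prod_{k=1}^N k = N!$, $\prod_{k=1}^N(k+1) = (N+1)!$, and $\prod_{k=1}^N(2k+1) = (2N+1)!/(2^N N!)$.

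What remains is a routine but delicate Stirling estimate. Applying the explicit two-sided inequality $\sqrt{2\pi N}(N/e)^N \leq N! \leq e\sqrt{N}(N/e)^N$ to each of the three factorials, the leading contributions combine so that the dominant piece $N\ln N - (2N+1)\ln(2N+1)$ produces, after division by $N$, the $-\ln N$ behavior of the lower bound and the $-\tfrac{1}{2}\ln N$ behavior of the upper bound. The residual $\frac{1}{N}\ln(\mathrm{const}\cdot(N+1))$ corrections displayed in the statement arise from combining the $\sqrt{2\pi N}$ and $e\sqrt{N}$ Stirling prefactors with the piece $\ln(N+1) = \ln((N+1)!) - \ln(N!)$ that does not otherwise cancel.

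The main obstacle will be careful bookkeeping at each branching point: deciding which variant of Stirling's inequality to apply, and which of the $c_{kj}$ bounds (the strong diagonal $c_{kk}^2 \geq 1$ versus the weaker off-diagonal interval $c_{kj}^2 \geq \frac{1}{3}$) to invoke for each piece, so as to land on the exact constants $e$ (lower) and $12e$ (upper) rather than looser values such as $6\pi e^2$ that a naive application would produce. I expect the strong diagonal constraint to be essential for the upper-bound constant (where a lower bound on $\Sigma_{kk}$ is needed), and the full off-diagonal range $c_{kj}^2 \leq \frac{4}{3}$ to be what tightens the lower-bound constant (where an upper bound on $\Sigma_{kk}$ is needed).
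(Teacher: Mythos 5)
Your proposal follows essentially the same route as the paper's proof: the Gaussian entropy formula, reduction to the determinant of the standardized covariance $D\Sigma D$, factorization via the lower-triangularity of $\mathbf{C}$ and $\mathbf{J}$ into $\det(C)^2\det(J)^2$ and a product of diagonal variances, the sandwich $\frac{1}{3}\frac{k(k+1)(2k+1)}{6}\leq \Sigma_{kk}/\alpha\leq \frac{4}{3}\frac{k(k+1)(2k+1)}{6}$, the same factorial identities, and a final Stirling estimate. The only minor discrepancy is in your bookkeeping of where the bounds are used: the paper applies $1\leq c_{kk}^2\leq \tfrac{4}{3}$ to control $\det(C)^2$ (the diagonal product), while the off-diagonal interval $c_{kj}^2\in[\tfrac13,\tfrac43]$ drives the $\Sigma_{kk}$ sandwich and hence $\det(D)^2$ — but the ingredients and conclusion are the same.
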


\begin{proof} We first note the growth rate of the diagonal elements of $\bar{\mathbf{V}}$'s covariance matrix $\Sigma$. 
\begin{align}
\Sigma_{11} \propto c_{11}^2 \\
\Sigma_{22} \propto c_{21}^2 + 4c_{22}^2 \\
\Sigma_{33} \propto c_{31}^2 + 4c_{32}^2 + 9c_{33}^2 
\end{align}
And thus we can estimate:
\begin{equation}\label{eqn:diag_cov}
\frac{1}{3}\frac{k(k+1)(2k+1)}{6} \leq \frac{\Sigma_{kk}}{\alpha} \leq \frac{4}{3}\frac{k(k+1)(2k+1)}{6}
\end{equation} 

where $\alpha = |z|^2\Delta_{E}^2\sigma^2$.

This estimate is important due to the preprocessing of our dataset. When applying a machine learning algorithm, we typically scale every data-set down such that the diagonal elements of the covariance matrix are equal to $1$. This makes the effective covariance ${\tilde{\Sigma} = D \Sigma D}$ for some matrix $D$ such that the diagonal of $\tilde{\Sigma}$ consists of $1's$. 

We can now estimate the entropy of a scaled voltage dataset via the entropy of a Gaussian random variable. 
\begin{align}
H(X) &= \frac{1}{2}ln\left((2\pi e)^N det(\tilde{\Sigma})\right) \\
& = \frac{1}{2}ln\left((2\pi e)^N det(\alpha DCJ^2C^TD)\right)
\end{align} 
Now, for calculating the determinant portion, we note that
\begin{equation}
det(\alpha DCJ^2C^TD) = \alpha^N det(D)^2det(C)^2det(J)^2
\end{equation}
First, we have $det(J)^2 = N!^2$. Further, we can bound  $det(C)$ as:
\begin{equation}
1 \leq det(C)^2 \leq \left(\frac{4}{3}\right)^N
\end{equation}
where the lower bound occurs if every customer is two-phase attached and the upper bound occurs if every customer is single-phase attached. Finally, due to (\ref{eqn:diag_cov}) and the scaling property:
\begin{equation}
\frac{9^N}{N!(2N+1)!(N+1)} \leq det(\alpha D)^2 \leq \frac{36^N}{N!(2N+1)!(N+1)} 
\end{equation}
Thus
\begin{align}
\frac{9^NN!}{\alpha^N(3N)!(N+1)} &\leq det(D)^2det(J)^2 \leq \frac{36^NN!}{\alpha^N(2N)!(N+1)}
\end{align}
Now, by Sterling's approximation, we can further obtain:
\begin{align}
\frac{\sqrt{\frac{2\pi}{e^2}}\left(\frac{e^2}{\alpha N^2}\right)^N}{(N+1)} \leq det(D)^2det(J)^2 \leq  \frac{\sqrt{\frac{e^2}{4\pi}}\left(\frac{9e}{\alpha N}\right)^N }{(N+1)}
\end{align}
From which we can finally obtain:
\begin{align}
	N ln\left(\frac{e}{N}\right) - ln\left(\frac{\sqrt{\frac{2\pi}{e^2}}}{(N+1)}\right) \leq H(\bar{\mathbf{V}})  \notag \\
	H(\bar{\mathbf{V}}) \leq \frac{N}{2} ln\left(\frac{12e}{N}\right) - ln\left(\frac{\sqrt{\frac{e^2}{4\pi}}}{(N+1)}\right)
\end{align}
\end{proof}
It is easy to see that both the lower bound and the upper bound become quite negative with $N$. For $N \approx 5000$, these bounds estimate the per-customer entropy to be roughly between $-4$ and $-11$ bits. If the smart meters are encoded with $16$ bits, then the per-customer entropy that the machine learning algorithm `sees' is between $5$ and $12$ bits. Thus we say that this problem is `inherently low entropy'. We note that the main reason this low entropy occurs is due to the fact that the covariance matrix diagonal values scaled with the distance of the corresponding customer from the substation. Had this not been the case - i.e., had every customer been of equal uncertainty, then the entropy of the dataset would have been much larger - scaling positively as $N$ grows large. 

We obtain a low entropy dataset because the uncertainty of the customer's voltage data is dominated by those customers far from the substation. This will stay true of many problems in networked systems. Since this is the main motivation behind the second technique introduced in this paper (information loading), we conjecture that the technique is highly applicable to problems beyond phase identification. 

\subsection{Maximum Mutual Information (MMI) Estimation: Further Evidence of the Low Entropy Feature Space Hypothesis} \label{entropy_est_sec}
For the next component of this paper, we will need a way of estimating the amount of mutual information that a given neural network \textit{can} carry about an input variable $X$. This estimation is the subject of this subsection.

We have from information theory that for any random variable $U$,
\begin{equation}
    H(X) = I(X;U) + H(X|U)
\end{equation}
We assume that $H(X)>0$ and suppose we have a class of conditional distributions $\mc{Q}$ from which to search for a variable $U$ meant to approximate $X$ (e.g. the hidden layer in our neural networks). Then if $\mc{Q}_{U|X}$ is large enough, it will contain a subset $\mc{Q}'$ such that $H(X|U') > 0$ for all $U'$ whose joint distribution follows $p(x) q'(u|x)$ for some  $q'(u|x) \in \mc{Q}'$. Then for such $U'$,  $H(X) \geq I(X;U)$, and we have equality when ${H(X|U) = 0}$. This motivates the estimator
\begin{equation}
    H(X) \geq \underset{q(u|x) \in \mc{Q}}{\text{sup }} I(X;U)
\end{equation}
whose optimum will not occur in $\mc{Q}'^{c}$. Thus we can then adapt the MINE-f mutual information estimator \cite{belghazi2018mine} to obtain
\begin{equation}\label{entropy_est}
    H(X) \geq \underset{q(u|x) \in \mc{Q}_{U|X}}{\text{sup }} \underset{t \in \mc{F}}{\text{sup }} \int t d\mathbb{P}_{XU} - \int e^{t-1} d\left(\mathbb{P}_X \otimes \mathbb{P}_U\right)
\end{equation}
where $\mc{F}$ is another space of functions. 
From here, we can simply define $\mc{Q}$ and $\mc{F}$ as spaces of functions parameterized by deep neural networks with fixed hyper-parameters. Since (\ref{entropy_est}) will only have equality for very large $\mc{Q}_{U|X}$, we will give the resulting right hand side a separate name - the \textit{maximum mutual information} (MMI) of $\mc{Q}_{U|X}$. This is a number which depends on the input feature space $\mc{X}$ and on the space $\mc{Q}_{U|X}$, and is the desired amount of mutual information that a given network (with fixed hyper-parameters) can carry about the input variable. 

We have experimentally conducted these MMI estimations on five real circuits with $\mc{F}$ defined by a feed-forward neural network with a single hidden layer of $1000$ units. We have observed a maximum MMI of just $8.67$ bits, and an average of just $6.21$ bits. For reference, these MMIs are much lower than that of the MNIST dataset, which has an MMI of about $21$ bits \cite{rippel2013high}. 

\subsection{Information Loading}
Let's quickly review what has been seen so far in this paper. In subsection \ref{losses}, we have the inequality ${I_{loss}(S) \leq 2 \delta(\hat{p})I(X;Z) + h_2(\delta(\hat{p}))}$. First, we used training data selection to reduce the term $\delta(\hat{p})$. We will now consider the other important term contributing to $I_{loss}(S)$:  $I(X;Z)$. At first glance, it would appear that \textit{reducing} $I(X;Z)$ would be a pertinent goal. Indeed this is the premise behind the information bottleneck method \cite{tishby2000information}. However, there is a hidden trade-off here. This is because reducing $I(X;Z)$ leads to its own form of information loss through the strong data processing inequality:
\begin{equation}
    I(Y;Z) \leq \eta I(X;Z), ~ \eta \leq 1  
\end{equation}
Thus we must balance the loss reduction in (\ref{info_bound}) against these additional losses. Furthermore, since we are already reducing $\delta(\hat{p})$ by using the data selection frameworks above, the marginal loss reduction that can be achieved from reducing $I(X,Z)$ is reduces as well. We thus conjecture that, unless $I(X;Z)$ is large, the losses from the data processing inequality will win out. But voltage data in general has relatively low entropy as we will show. Thus $I(X;Z)$ will be low as well for all random variables $Z$. Thus, we should attempt to keep $I(X;Z)$ as large as possible.  

However, there are losses in $I(X:Z)$ that occur naturally. First, if any stochasticity is introduced to the neural network in use, then we can view the neural network as a lossy channel. Information losses in $I(X;Z)$ will occur as a result. This can be alleviated by just not using a stochastic network, but there is a second form of $I(X;Z)$  losses that are more critical - finite data information losses. These losses come from the fact that, even though $X$ is instantiated for every customer, our neural classifier only sees $x$ instantiations that are accompanied by $y$ labels.  This artificially limits the amount of $X$ data that is seen, yielding the losses in $I(X;Z)$. 

But this need not be the case. If we can write an estimator of $I(X;Z)$ as a function of neural network parameters, then we can simply add in an information `anti-regularization' term to whatever loss function we are using. That is, if $\mc{L}$ is our current supervised loss function (say cross-entropy), then we can modify this to 
\begin{equation} \label{bottleneck_opp}
    \mc{L} - \beta I(X;Z),~ \beta>0
\end{equation}
effectively performing the \textit{opposite} of the information bottleneck method. 

We have such an estimator introduced in this paper already: MINE-f from subsection \ref{entropy_est_sec}. Plugging this estimator into (\ref{bottleneck_opp}) will yield the desired result. See figure \ref{fig:iloading:string}.

\begin{figure}[t]
\centering
\resizebox{0.45\textwidth}{4cm}{
\begin{tikzpicture}[thick]
  \node (blank) at (-2, 0)[draw=none, fill=none]{};
  \node (xj) at (0,0) [draw=none, fill=none]{$x_j$};
  \node (xm) at (0,-2) [draw=none, fill=none]{$x_m$};
  \node (xp) at (0,-3) [draw=none, fill=none]{$x_p$};
  \node (x) at (0,-4) [draw=none, fill=none]{$x_{tr}$};
  \node (y) at (0,-5) [draw=none, fill=none]{$y_{tr}$};
 
  \node (r1) at (2,0) [rectangle,draw=black] {$r(\cdot)$};
  \node (t1) at (4,0) [rectangle,draw=black] {$t(\cdot, \cdot)$};
  
  \node (p1) at (6, -1) [circle, draw=black] {$+$};
  \node (b) at (8, -1) [rectangle,draw=black] {$-\beta$};
  
  \node(p2) at (8, -2.5) [circle, draw=black] {$+$};
  
  \node (r2) at (2,-2) [rectangle,draw=black] {$r(\cdot)$};
  \node (t2) at (4,-2) [rectangle,draw=black] {$t(\cdot, \cdot)$};
  \node (e) at (6,-2) [rectangle,draw=black] {$-e^{(\cdot) - 1}$};
  
  \node (r3) at (2,-4) [rectangle,draw=black] {$r(\cdot)$};
  \node (s) at (4,-4) [rectangle,draw=black] {$\sigma(\mathbf{w}_{\theta_c}^T\cdot)$};
  \node (h) at (6, -4) [rectangle,draw=black] {$H_{p,q}(\cdot, \cdot)$};
  
  \node (l) at (10, -2.5) [draw=none, fill=none]{$\mc{L}$};

  \draw[] (xj) -- (r1);
  \draw[o-] (r1) -- (t1);
  \draw[] (xj) -- ++(1, -1) -| (t1.south);
  
  \draw[] (xm) -- (r2);
  \draw[o-] (r2) -- (t2);
  \draw[] (xp) -- ++(4, 0) -| (t2.south);
  \draw[] (t2) -- (e);
 
 \draw[] (e) -- (p1);
 \draw[] (t1) -- ++ (2, 0) -| (p1);
 
  \draw[](x) -- (r3);
  \draw[](r3) -- (s);
  \draw[](s) -- (h);
  \draw[](y) -- ++(6,0) -| (h);
  
  \draw[] (p1) -- (b);
  \draw[] (b) -- (p2);
  \draw[] (h) -- ++(2, 0) -| (p2);
  
  \draw[] (p2) -- (l);
  
\end{tikzpicture}
}
    \caption{String diagram representation of the information loading forward pass. $r: \mc{X} \to \mc{Z}$ is the representation function. ${t: \mc{X} \times \mc{Z} \to \mathbb{R}}$ is the argument of the information estimator.}
    \label{fig:iloading:string}

\end{figure}
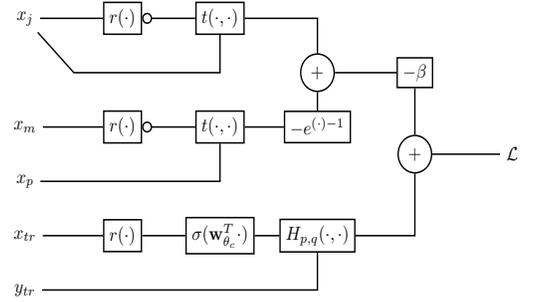

\section{Experiments}   

\subsection{Data}
This analysis will be performed over $5$ circuits of varying complexity from Southern California Edison, Pacific Gas and Electric Company, and FortisBC. The details of these circuits are contained in the Table \ref{CharTable}.
	\begin{table}[h]
	\centering
	\caption{Distribution Circuits Characteristics}
		\begin{tabular}{| c | c | c | c |}
			\hline
			Name & $N_{consumers}$ & Phase Connections & Degree of Balance \\ \hline
			I  & 1892 & $A,B,C,ABC$ & Low \\ \hline
			II & 3166 & All & Low \\ \hline
			III & 4629 & $A,B,C,ABC$ & High \\ \hline
			IV & 3638 & $B,C,AB,BC,CA$ & High \\ \hline
			V   & 1773 & $A,B,C$ & Low \\ \hline
		\end{tabular}
	\label{CharTable}
	\end{table}
	Where Degree of Balance is measured by the average current coming back on the neutral line in the distribution circuit. The obtained values are partitioned into two equal-probability bins which we denote as `Low' and `High'. \color{black}
	
	Each circuit contains $31$ days of voltage magnitude data, sampled hourly for a feature vector of dimension $744$. All experiments are performed with $5\%$ of the total customers used as training data. Every reported number is an average over ten trials. 
		
	Empirically, circuits with more potential phase connections (e.g. $A$, $B$, $C$, $AB$, $BC$, $CA$ vs. just $A$, $B$ and $C$) typically have lower Phase Identification accuracy. This is firstly due to the fact that the difficulty of a classification task is related to the number of classes, but also due to the fact that there are nontrivial dependencies between some of these classes; for example, transformers of the $AB$ class take current from the $A$ line and send it back along the $B$ line, which complicates the dynamics of transformers attached to just $A$ or $B$. Balanced circuits also have lower Phase Identification accuracy than unbalanced ones, but the effect is less significant. The more phase connections available and the more balanced the circuit is, the more 'difficult' that circuit is to identify.

\subsection{Preprocessing}
In many distribution systems, center tapped transformers are abundant. As such, voltage magnitude data will come in two bulk clusters, one near $120V$ and one near $240V$. This distinction has little relevance for the phase connection type of the corresponding customer, and will add instability into any supervised learning algorithm. We take care of this by \textit{self}-normalizing each voltage time series such that its time-average is equal to $1.0$.  We follow this step with the standard preprocessing technique of \textit{batch}-normalizing the data to have an batch-mean of $0.0$ and a batch-standard deviation of $1.0$. 

\subsection{Results}
	We first desired to establish some baseline accuracies for the phase identification problem using standard supervised learning approaches. The results of this analysis are shown in Table \ref{tab:baseline_experiments}. We see that a two layer neural network with $500$ hidden units outperforms the other methods in $4$ out of $5$ cases. Only on circuit II is the neural network beaten, and barely so. Thus we decided to implement our changes on this classifier. 
	
	We tested our proposed techniques in all permutations. These are all shown in Table \ref{tab:experiments}. The first column of this table repeats the accuracy of the baseline two layer neural network from figure \ref{tab:baseline_experiments}. The second column considers information loading in isolation. This yields minor to substantial changes. The effect is highly dependent on the circuit. In general, this technique seems to yield larger improvements to harder circuits. We then tested, training data selection under the facility location method \cite{sener2018active}. This yields substantial improvement in every case. We observe in the fourth  column that the inverse-matrix heuristic outperformed the facility location method in every case. Finally, we performed both the information loading technique and the inverse-schur heuristic to obtain the rightmost column in the table. These combined techniques yield the highest phase identification accuracies in every case. They are phenomenally improved from the baseline results. 

\begin{table}[h]
    \centering
    \begin{tabular}{@{}l l l l l l l@{}}\toprule
         Circuit  & Neighbors & Decision Tree & Random Forest & Neural (2-layer) \\ \hline
         \rule{0pt}{2.5ex}  I &  74.6\% &  71.5\% & 68.2\% & \textbf{80.7}\% \\ 
         \rule{0pt}{2.5ex}  II &  \textbf{64.8\%} &  59.3\% & 39.3\% & 64.7\% \\ 
         \rule{0pt}{2.5ex}  III &  70.6\% &  59.2\% & 59.4\% & \textbf{71.4\%} \\ 
         \rule{0pt}{2.5ex}  IV  &  67.2\% &  59.3\% & 51.8\% & \textbf{75.0\%} \\ 
         \rule{0pt}{2.5ex}  V   &  41.0\% & 50.00\% &  37.00\% & \textbf{51.7\%} \\ \hline
    \end{tabular}
    \caption{Baseline Establishment}
    \label{tab:baseline_experiments}
\end{table}

\begin{table}[h]
    \centering
    \begin{tabular}{@{}l l l l l l l@{}}\toprule
         Circuit  & Baseline & I-loading & Facility & Inverse & Inverse+I-loading   \\ \hline
         \rule{0pt}{2.5ex}  I & 80.7\% & 81.5\% & 86.7\% & 89.9\% & \textbf{91.0\%} \\ 
         \rule{0pt}{2.5ex}  II & 64.7\% & 80.6\% & 90.5\% & 95.4\% & \textbf{96.3\%} \\ 
         \rule{0pt}{2.5ex}  III & 74.1\% & 75.2\% & 90.6\% & 91.5\% & \textbf{93.1\%} \\ 
         \rule{0pt}{2.5ex}  IV  & 75.0\% & 78.0\% & 91.2\% & 94.6\% & \textbf{98.8\%} \\ 
         \rule{0pt}{2.5ex}  V   & 51.7\% & 59.1\% & 94.2\% & 96.1\% & \textbf{97.3\%} \\ \hline
    \end{tabular}
    \caption{Proposed Techniques}
    \label{tab:experiments}
\end{table}

\begin{table}[h]
    \centering
    {\begin{tabular}{@{}l l l l l l l@{}}\toprule
         Circuit  & Correlation & Clustering & Proposed   \\ \hline
         \rule{0pt}{2.5ex}  I & 37.8\%  &75.1\%   & \textbf{91.0\%} \\ 
         \rule{0pt}{2.5ex}  II & 34.1\%  &56.4\%  & \textbf{96.3\%} \\ 
         \rule{0pt}{2.5ex}  III & 46.4 \%  &65.7\%  & \textbf{93.1\%} \\ 
         \rule{0pt}{2.5ex}  IV  &  40.1\%  &53.6\% & \textbf{98.8\%} \\ 
         \rule{0pt}{2.5ex}  V   & 38.4\%  &38.4\%  & \textbf{97.3\%} \\ \hline
    \end{tabular}}
    \caption{Accuracy comparisons between the literature and the proposed method.}
    \label{tab:experiments_lit}
\end{table}

\begin{figure}[t!]
    \centering
    \hspace{25em} \includegraphics[width=0.45\textwidth]{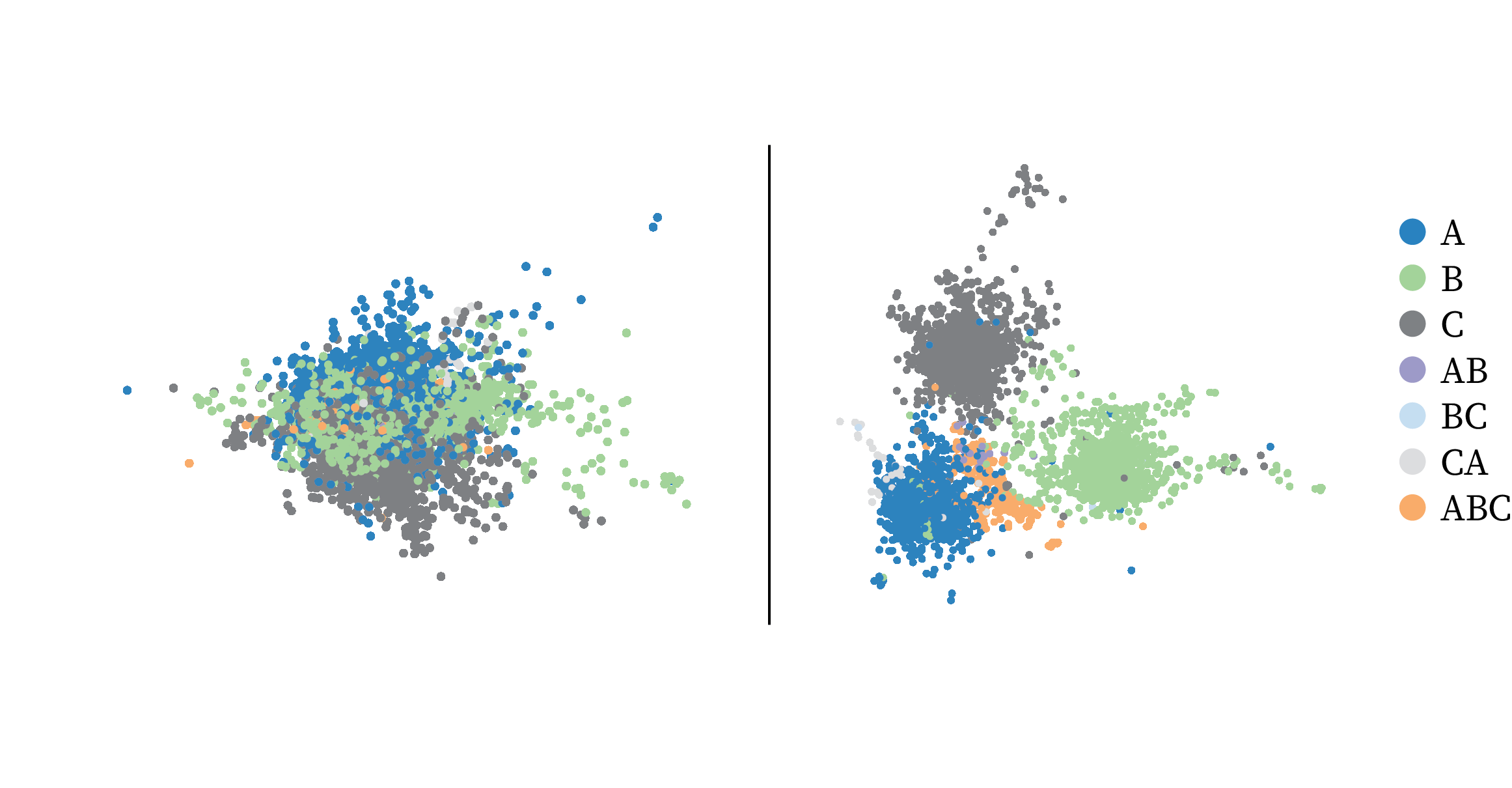}
    \caption{Learned representations on circuit II. (Left) random selection with no information loading. (Right) Targeted selection with information loading.}
    \label{fig:representations}
\end{figure}

\begin{figure}[t!]
    \centering
    \hspace{25em} \includegraphics[width=0.45\textwidth]{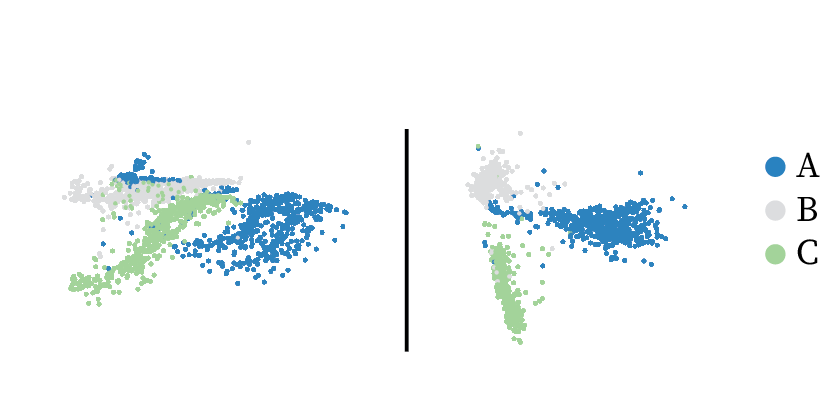}
    \caption{Learned representations on circuit V. (Left) random selection with no information loading. (Right) Targeted selection with information loading.}
    \label{fig:representations_s}
\end{figure}

\begin{figure}[t!]
    \centering
    \includegraphics[width=0.5\textwidth]{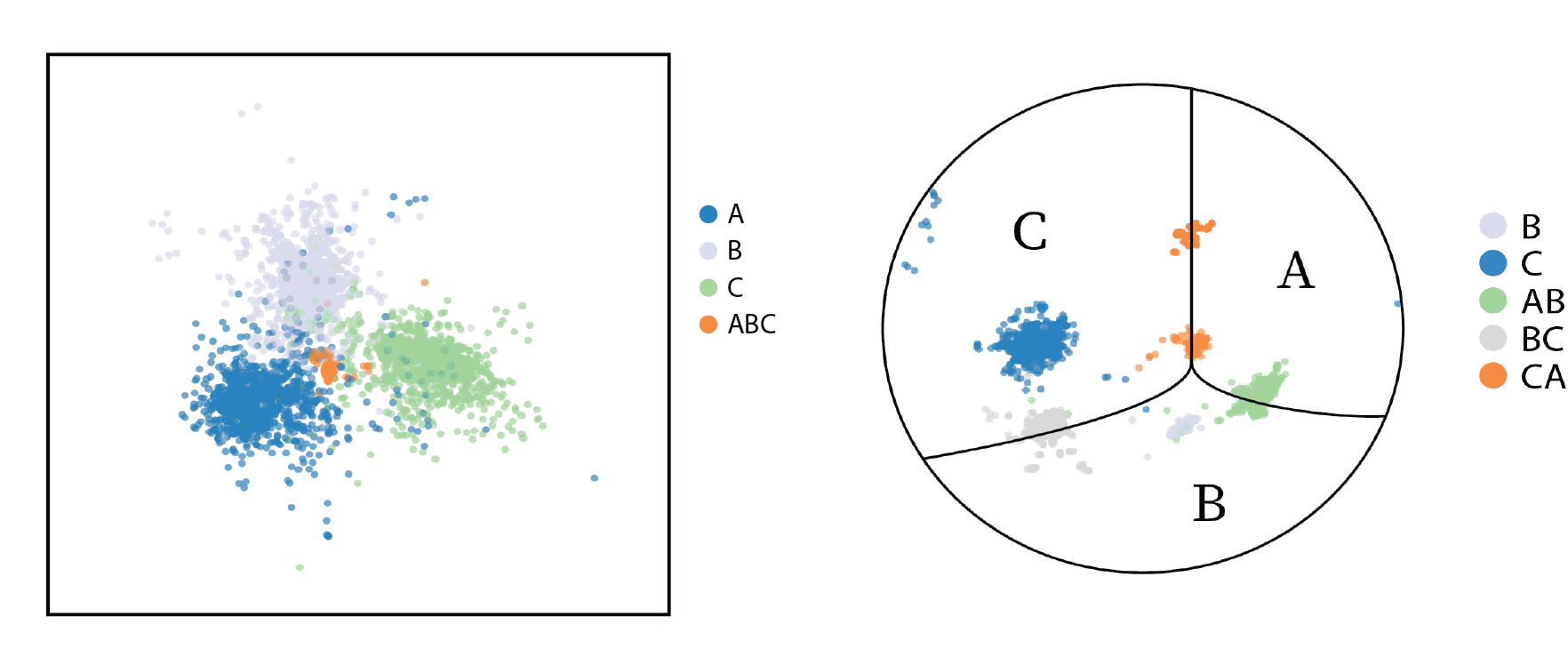}
    \caption{Learned representations on circuits I (left) and IV (right) with both techniques implemented.}
    \label{fig:representations_102_BH}
\end{figure}

We've also compared the accuracies obtained by our proposed method to two of the methods of phase identification in literature. Both of these methods lie on the physical-intuition side of techniques, in contrast to the more abstract off-the-shelf machine learning techniques. The first, which we've denoted `correlation', slightly modifies the correlation based methods of references \cite{liao2018unbalanced, 6465632, 6039623, 7604110, 7875102} by computing the empirical voltage correlation matrix over the customers, and using this matrix to link the customers together under complete-linkage. The second, which we've denoted `clustering' is equivalent to that of reference \cite{7838154}. We see that the literature is capable of achieving similar accuracies to the more abstract machine learning algorithms in Table \ref{tab:baseline_experiments}, but without requiring training data. However, our proposed method, which synergistically combines both physical intuition and the more abstract learning algorithms, beats both types of methods individually, yielding the best results on these datasets by a large margin.

Since the central idea of this paper lies in learning better representations, we ought to visualize what the representations have learned. This has been done for two of the circuits, and is visualized in figures \ref{fig:representations} and \ref{fig:representations_s}. Each point on these plots corresponds to an averaged sample from the distribution $p(z_i|x_i)$ averaged over $50$ trials. Each of these averages have dimension $500$. We reduced this to $2$ dimensions by performing PCA and projecting each point onto the first two principal components.

We observe that the representation has indeed learned much better representations by using these techniques. In the baseline case, there is little separation between the classes - especially in the case of circuit II. By implementing the proposed techniques, we see more class separation in these variables. Furthermore, we observe some learning of the relationships between the classes. For example, in circuit II, the learned representation has placed $ABC$ directly in the middle of the classes $A$ $B$ and $C$. This is significant since, by subsection \ref{sub:properties}, phase $ABC$ truly is a `combination' of phases $A$, $B$, and $C$ when it comes to voltage data. In the case of circuit V, we have learned even more. In the final representation, we see three branches of representation data - one for each class. The distance from the central node of these representations corresponds to distance along the primary feeder of the distribution circuit. 

Finally, we have plotted the final representations under both techniques for circuits I, and IV in figure \ref{fig:representations_102_BH}. The behavior of circuit III is nearly identical to circuit I, so we are not presenting its plot for the sake of space. In the cases of circuits I and III, we see similar behavior. Independent $An$ $Bn$ and $Cn$ clusters have arisen, and an $ABC$ cluster has appeared in the middle. This is, again, what we expect from circuits consisting of those phase types. Circuit IV is a bit more interesting because it consists of phase types $Bn, Cn, AB, BC$ and $CA$, with $Bn$ having very little representation. Clusters of each phase type have appeared with decent separation, so classification will at least be easy. More interestingly, however, is the fact that the $AB$ cluster has appeared opposite to the $Cn$ cluster, $CA$ opposite $Bn$. Assuming the location of the non-existent $An$ cluster to the top right, these positions make much sense.

\section{Conclusion}
 This paper has used the theory of information losses to propose the application of two novel techniques - inverse schur data selection and information loading - to the phase identification problem. These techniques have synergistically combined the the abstract, problem agnostic, supervised learning techniques found in machine learning research with the physical intuitions that are often employed in more specific power systems projects. As such, we observe substantial improvements in phase identification accuracy over both the purely abstract methods and those methods which only base themselves on the physical intuitions. Furthermore, we have observed that the representations learned upon using these techniques are much more meaningful than the baseline representations,  giving us highly interpret-able results which are often not found in techniques which use abstract machine learning algorithms alone. We have argued that these techniques generalize well beyond just phase identification, and have listed properties for which these techniques will be helpful.  

\bibliographystyle{ieeetran}
\bibliography{references}

\begin{thebibliography}{10}
\providecommand{\url}[1]{#1}
\csname url@samestyle\endcsname
\providecommand{\newblock}{\relax}
\providecommand{\bibinfo}[2]{#2}
\providecommand{\BIBentrySTDinterwordspacing}{\spaceskip=0pt\relax}
\providecommand{\BIBentryALTinterwordstretchfactor}{4}
\providecommand{\BIBentryALTinterwordspacing}{\spaceskip=\fontdimen2\font plus
\BIBentryALTinterwordstretchfactor\fontdimen3\font minus
  \fontdimen4\font\relax}
\providecommand{\BIBforeignlanguage}[2]{{%
\expandafter\ifx\csname l@#1\endcsname\relax
\typeout{** WARNING: IEEEtran.bst: No hyphenation pattern has been}%
\typeout{** loaded for the language `#1'. Using the pattern for}%
\typeout{** the default language instead.}%
\else
\language=\csname l@#1\endcsname
\fi
#2}}
\providecommand{\BIBdecl}{\relax}
\BIBdecl

\bibitem{chen2012}
C.-S. Chen, T.-T. Ku, and C.-H. Lin, ``Design of phase identification system to
  support three-phase loading balance of distribution feeders,'' \emph{IEEE
  Transactions on Industry Applications}, vol.~48, no.~1, pp. 191--198, 2012.

\bibitem{US8825416}
V.~Arya, V.~Chakaravarthy, K.~Dontas, S.~Hoy, J.~Kalagnanam, S.~Kalyanaraman,
  C.~Pavlovski, and D.~Seetharamakrishnan, ``Systems and methods for phase
  identification,'' Patent US8\,825\,416.

\bibitem{wen2015}
M.~H. Wen, R.~Arghandeh, A.~von Meier, K.~Poolla, and V.~O. Li, ``Phase
  identification in distribution networks with micro-synchrophasors,'' in
  \emph{2015 IEEE Power \& Energy Society General Meeting}.\hskip 1em plus
  0.5em minus 0.4em\relax IEEE, 2015, pp. 1--5.

\bibitem{caird2012}
K.~J. Caird, ``Meter phase identification,'' Mar.~27 2012, uS Patent 8,143,879.

\bibitem{985153}
M.~Dilek, R.~P. Broadwater, and R.~Sequin, ``Phase prediction in distribution
  systems,'' in \emph{Power Engineering Society Winter Meeting, 2002. IEEE},
  vol.~2, 2002, pp. 985--990 vol.2.

\bibitem{7526150}
and A.~{Rajeswaran}, N.~P. {Bhatt}, and R.~{Pasumarthy}, ``A novel approach for
  phase identification in smart grids using graph theory and principal
  component analysis,'' in \emph{2016 American Control Conference (ACC)}, July
  2016, pp. 5026--5031.

\bibitem{6102329}
V.~Arya, D.~Seetharam, S.~Kalyanaraman, K.~Dontas, C.~Pavlovski, S.~Hoy, and
  J.~R. Kalagnanam, ``Phase identification in smart grids,'' in \emph{Smart
  Grid Communications (SmartGridComm), 2011 IEEE International Conference on},
  Oct 2011, pp. 25--30.

\bibitem{liao2018unbalanced}
Y.~Liao, Y.~Weng, G.~Liu, Z.~Zhang, C.-w. Tan, and R.~Rajagopal, ``Unbalanced
  three-phase distribution grid topology estimation and bus phase
  identification,'' \emph{arXiv preprint arXiv:1809.07192}, 2018.

\bibitem{6465632}
H.~Pezeshki and P.~J. Wolfs, ``Consumer phase identification in a three phase
  unbalanced {LV} distribution network,'' in \emph{2012 3rd IEEE PES Innovative
  Smart Grid Technologies Europe (ISGT Europe)}, Oct 2012, pp. 1--7.

\bibitem{6039623}
B.~K. {Seal} and M.~F. {McGranaghan}, ``Automatic identification of service
  phase for electric utility customers,'' in \emph{2011 IEEE Power and Energy
  Society General Meeting}, July 2011, pp. 1--3.

\bibitem{7604110}
M.~{Xu}, R.~{Li}, and F.~{Li}, ``Phase identification with incomplete data,''
  \emph{IEEE Transactions on Smart Grid}, vol.~9, no.~4, pp. 2777--2785, July
  2018.

\bibitem{7875102}
S.~J. {Pappu}, N.~{Bhatt}, R.~{Pasumarthy}, and A.~{Rajeswaran}, ``Identifying
  topology of low voltage distribution networks based on smart meter data,''
  \emph{IEEE Transactions on Smart Grid}, vol.~9, no.~5, pp. 5113--5122, Sep.
  2018.

\bibitem{6365289}
T.~A. Short, ``Advanced metering for phase identification, transformer
  identification, and secondary modeling,'' \emph{IEEE Transactions on Smart
  Grid}, vol.~4, no.~2, pp. 651--658, June 2013.

\bibitem{7838154}
W.~{Wang}, N.~{Yu}, B.~{Foggo}, J.~{Davis}, and J.~{Li}, ``Phase identification
  in electric power distribution systems by clustering of smart meter data,''
  in \emph{2016 15th IEEE International Conference on Machine Learning and
  Applications (ICMLA)}, Dec 2016, pp. 259--265.

\bibitem{8245748}
F.~{Ni}, J.~Q. {Liu}, F.~{Wei}, C.~D. {Zhu}, and S.~X. {Xie}, ``Phase
  identification in distribution systems by data mining methods,'' in
  \emph{2017 IEEE Conference on Energy Internet and Energy System Integration
  (EI2)}, Nov 2017, pp. 1--6.

\bibitem{cover2012elements}
T.~M. Cover and J.~A. Thomas, \emph{Elements of information theory}.\hskip 1em
  plus 0.5em minus 0.4em\relax John Wiley \& Sons, 2012.

\bibitem{tishby2000information}
N.~Tishby, F.~C. Pereira, and W.~Bialek, ``The information bottleneck method,''
  \emph{arXiv preprint physics/0004057}, 2000.

\bibitem{hsu2006video}
W.~H. Hsu, L.~S. Kennedy, and S.-F. Chang, ``Video search reranking via
  information bottleneck principle,'' in \emph{Proceedings of the 14th ACM
  international conference on Multimedia}.\hskip 1em plus 0.5em minus
  0.4em\relax ACM, 2006, pp. 35--44.

\bibitem{slonim2000agglomerative}
N.~Slonim and N.~Tishby, ``Agglomerative information bottleneck,'' in
  \emph{Advances in neural information processing systems}, 2000, pp. 617--623.

\bibitem{shamir2008learning}
O.~Shamir, S.~Sabato, and N.~Tishby, ``Learning and generalization with the
  information bottleneck,'' in \emph{International Conference on Algorithmic
  Learning Theory}.\hskip 1em plus 0.5em minus 0.4em\relax Springer, 2008, pp.
  92--107.

\bibitem{2019arXiv190205991F}
B.~{Foggo}, N.~{Yu}, J.~{Shi}, and Y.~{Gao}, ``{Asymptotic Finite Sample
  Information Losses in Neural Classifiers},'' \emph{arXiv e-prints}, p.
  arXiv:1902.05991, Feb 2019.

\bibitem{settles2012active}
B.~Settles, ``Active learning,'' \emph{Synthesis Lectures on Artificial
  Intelligence and Machine Learning}, vol.~6, no.~1, pp. 1--114, 2012.

\bibitem{tong2001active}
S.~Tong, \emph{Active learning: theory and applications}.\hskip 1em plus 0.5em
  minus 0.4em\relax Stanford University USA, 2001, vol.~1.

\bibitem{foggo2019interpreting}
B.~Foggo and N.~Yu, ``Interpreting active learning methods through information
  losses,'' \emph{arXiv preprint arXiv:1902.09602}, 2019.

\bibitem{belghazi2018mine}
I.~Belghazi, S.~Rajeswar, A.~Baratin, R.~D. Hjelm, and A.~Courville, ``Mine:
  mutual information neural estimation,'' \emph{arXiv preprint
  arXiv:1801.04062}, 2018.

\bibitem{rippel2013high}
O.~Rippel and R.~P. Adams, ``High-dimensional probability estimation with deep
  density models,'' \emph{arXiv preprint arXiv:1302.5125}, 2013.

\bibitem{sener2018active}
O.~Sener and S.~Savarese, ``Active learning for convolutional neural networks:
  A core-set approach,'' \emph{arXiv preprint arXiv:1708.00489}, 2018.

\end{thebibliography}

\end{document}